\newcommand\Tstrut{\rule{0pt}{2.9ex}} 
\newcommand\Bstrut{\rule[-4.3ex]{0pt}{0pt}}
\newcommand{\Desc}[2]{\State \makebox[2em][l]{#1}#2}
\algnewcommand{\IfElse}[3]{\State #1 \algorithmicif\ #2\ \algorithmicelse\ #3}
\newcommand\commentline[1]{\texttt{/$\ast$ #1 $\ast$/}}
\newcommand{\func}{\texttt}
\newcommand{\bvec}{\textbf}
\newcommand*\concat{\mathbin{\|}}
\theoremstyle{definition}
\newtheorem{definition}{Definition}[section]
\newtheorem{theorem}{Theorem}
\newtheorem{property}{Property}
\begin{document} 

\title{Parallel Algorithms for Exact Enumeration of Deep Neural Network Activation Regions}

\author{Sabrina Drammis}
\email{sdrammis@mit.edu}
\affiliation{
\institution{MIT}
\city{Cambridge} \state{MA} \country{USA} \postcode{02139}
}

\author{Bowen Zheng}
\email{bowen27@mit.edu}
\affiliation{
\institution{MIT}
\city{Cambridge} \state{MA} \country{USA} \postcode{02139}
}

\author{Karthik Srinivasan}
\email{skarthik@mit.edu}
\affiliation{
\institution{MIT}
\city{Cambridge} \state{MA} \country{USA} \postcode{02139}
}

\author{Robert C. Berwick}
\email{berwick@csail.mit.edu}
\affiliation{
\institution{MIT}
\city{Cambridge} \state{MA} \country{USA} \postcode{02139}
}

\author{Nancy A. Lynch}
\email{lynch@csail.mit.edu}
\affiliation{
\institution{MIT}
\city{Cambridge} \state{MA} \country{USA} \postcode{02139}
}

\author{Robert Ajemian}
\email{ajemian@mit.edu}
\affiliation{
\institution{MIT}
\city{Cambridge} \state{MA} \country{USA} \postcode{02139}
}

\renewcommand{\shortauthors}{Drammis et al.}

\setcopyright{none}
\settopmatter{printacmref=false}
\renewcommand\footnotetextcopyrightpermission[1]{}
\pagestyle{plain}

\begin{abstract}
A feedforward neural network using rectified linear units constructs a mapping from inputs to outputs by partitioning its input space into a set of convex regions where points within a region share a single affine transformation.
In order to understand how neural networks work, when and why they fail, and how they compare to biological intelligence, we need to understand the organization and formation of these regions.
Step one is to design and implement algorithms for exact region enumeration in networks beyond toy examples.

In this work, we present parallel algorithms for exact enumeration in deep (and shallow) neural networks.
Our work has three main contributions: (1) we present a novel algorithm framework and parallel algorithms for region enumeration; (2) we implement one of our algorithms on a variety of network architectures and experimentally show how the number of regions dictates runtime; and (3) we show, using our algorithm's output, how the dimension of a region's affine transformation impacts further partitioning of the region by deeper layers.

To our knowledge, we run our implemented algorithm on networks larger than all of the networks used in the existing region enumeration literature.
Further, we experimentally demonstrate the importance of parallelism for region enumeration of any reasonably sized network.
\end{abstract}

\keywords{activation regions, linear regions, cell enumeration, neural networks, embarrassingly parallel}

\settopmatter{printfolios=true}
\pagenumbering{arabic}

\maketitle

\section{Introduction}

Artificial neural networks (ANNs) are the dominant architecture in artificial intelligence today, demonstrating state-of-the-art (and sometimes even superhuman) performance in many applications such as object recognition, competitive games, and natural language processing \cite{silver2016mastering, krizhevsky2012imagenet}.
However, despite their stunning success in select instances, no fundamental theoretical understanding exists as to how, why, or in what circumstances these networks perform well.
We therefore find ourselves in the situation of having a well-defined network model and learning algorithm that together fit training data and appear to ``generalize'' fairly well; yet we do not know how to characterize the mathematical function that is being implemented.

The lack of any fundamental understanding of how artificial neural networks perform as they do has profound implications on their use.
Currently, there exists no principled means of optimally tailoring the design of a neural network to meet the demands of a specific problem.  
Thus, the many architectural choices inherent in network design -- how many layers to use, what activation function to choose, which variant of gradient descent learning to adopt, which regularizer to employ, what form of pre-processing to utilize, etc. – are essentially determined through crude heuristics at best and trial-and-error at worst.
This undesirable circumstance made a prominent neural network researcher to remark that the application of neural networks has turned into a ``form of alchemy'' \cite{Hutson2018}.

Although well-known mathematical constructs do not exist to characterize all neural networks, one common type of neural network can be described in a relatively straightforward manner.  
Specifically, we are here referring to deep neural networks that employ rectified linear (ReLU) activation function throughout.
Since it is trivially the case that the composition of piecewise linear functions must itself be a linear function\footnote{The function is actually affine since an offset is added, but the term linear is often used in the literature even when affine would be more appropriate.}, any multi-layer perceptron (MLP) using ReLU must behave as follows: (1) the input space is partitioned into a set of convex polytope activation regions (i.e., cells); (2) all the points within a polytope share a single affine transformation which generates the output \cite{pascanu2013number, montufar2014number}. 
Therefore, if we want to understand what functions a network represents, we need to understand the underlying polytope structure instantiated by the network parameters.
Specifically, we would like to know how many polytopes are there, what are their defining properties, and, most importantly, what are the statistics by which the training data slot into specific polytopes.

In this paper we address the specific problem of how to design and use parallel algorithms to exactly enumerate the polyhedral activation regions and all their boundaries for realistically sized networks.  
Until now, empirical studies of activation region enumeration has either been approximate over small subspaces \cite{chmielewskii-anders_activation_2020, novak_sensitivity_2018, hanin_deep_2019} or limited to toy problems \cite{serra_bounding_2018, serra_empirical_2019, balestriero_fast_2023, robinson_dissecting_2020}.

We design, implement, and analyze performance of an ``embarrassingly parallel'' algorithm for one of the most common cell enumeration problem settings, and we sketch two parallel algorithms for the other. 
To our knowledge, our implementation and experiments run on larger networks and input spaces than all previous cell enumeration experiments in the literature. 
Going beyond algorithm design and implementation, we show how the output of our algorithms can be used to understand the operations of these networks.

Summary of contributions: 

\begin{itemize}
    \item (Section \ref{sec:framework}) We introduce LayerWise-NNCE-Framework, a framework for designing serial neural network cell enumeration algorithms that utilize existing algorithms from computational geometry as subroutines. 
    \item (Section \ref{sec:algos}) We present parallel algorithms for cell enumeration for the two most common problem settings using the LayerWise-NNCE-Framework.
    \item (Section \ref{sec:results}) We implement one of our parallel algorithms and present performance results. 
    Specifically, we show that our algorithm's performance is linear in the number of cells, and that parallelism is critically important for enumerating cells in any network beyond toy problems.
    \item (Section \ref{sec:results}) We analyze our algorithm's experimental output and demonstrate how cell enumeration provides insight into the operations of neural networks. 
    First, we show that network performance correlates with the number of cells (as has been shown previously in the literature). 
    Second, we present a novel analysis demonstrating how polyhedral regions are organized in a layer-dependent manner.
\end{itemize}

\section{Preliminaries and Background}

In this section we provide the notation and definitions related to deep neural networks and computational geometry that are necessary to understand our algorithms.
A summary of notation can be found in Table \ref{tab:notation}.
 
\begin{table}[h]
    \centering
    \begin{tabular}{|l p{7cm}|}
        \hline \Tstrut
        $\theta$ & the set $\cup_{l=1}^{L+1}\{\bvec{W}^l, \bvec{b}^l\}$, representing the parameters of a neural network. \\
        $L$ & a positive integer, representing the number of hidden layers in a network. \\
        $n_0$ & a non-negative integer, representing the dimension of the input space and thus width (i.e., number of neurons) in the network's input layer.  \\
        $n_l$ & a non-negative integer, representing the width in layer $l \in \{1, ..., L\}.$ \\
        $r(x)$ & the ReLU activation function, i.e., $\max\{0,x\}$. \\
        $\mathbb{X}$ & a convex subset of $\mathbb{R}^{n_0}$, representing the bounded input domain of a neural network. \\
        $X$ & a set of oriented hyperplanes, whose intersection of closed upper half-spaces is $\mathbb{X}$. \\
        $\bvec{x}$ & a vector in $\mathbb{X}$, representing a network input vector of size $n_0$. \\
        $\bvec{y}$ & a vector in $\mathbb{R}^m$, representing a network output vector. \\
        $Z$ & a tuple of binary indicator vectors, representing an activation pattern, i.e. $(\bvec{z}^1, ... , \bvec{z}^L)$. \\
        $\mathcal{A}$ & a finite set of distinct hyperplanes forming a hyperplane arrangement. \\ 
        $C$ & a $d$-dimensional face, i.e. a cell, represented as a set of hyperplanes. \\
        $\mathcal{C}$ & a set of cells. \\
        $\bvec{s}$ & a vector of $+$, 0, and $-$ entries, representing the sign vector of a hyperplane arrangement. \\
        $C(\bvec{s})$ & a cell that is described by the sign vector $\bvec{s}$. \\
        $\mathcal{S}$ & a set of sign vectors. \\
        $V$ & a tuple of sign vectors, representing a network's sign vector, i.e., $(\bvec{s}^1, ..., \bvec{s}^L)$. \\
        $\mathcal{V}$ & a set of network sign vectors. \\
        $C(V)$ & a cell that is described by the network sign vector $V$. \\
        $P$ \Bstrut & an integer, representing the number of available parallel processors. \\
        \hline
    \end{tabular}
    \caption{Glossary of notations}
    \label{tab:notation}
\end{table}

\subsection{Deep Neural Networks}

We follow the notation of \citet{serra_bounding_2018}.
The networks we consider in this work are MLP-ANNs with ReLU, $r(x) = \max\{0,x\}$, activation functions.
We let $L$ denote the number of hidden layers in a network and consider networks with both shallow, $L = 1$, and deep, $L \ge 2$, architectures.

An ANN takes as input a vector of length $n_0$ in the form $\textbf{x} = [x_1, x_2, ..., x_{n_0} ]^T$, where input variables are assumed to be within a convex bounded domain $\mathbb{X} \subset \mathbb{R}^{n_0}$.
The network output is a length $m$ vector in the form $\textbf{y} = [y_1, y_2, ..., y_m]^T$ where $\bvec{y} \in \mathbb{R}^{m}$.
Hidden layers $l \in \mathbb{L} = \{1, 2, ..., L\}$ have  $n_l$ neurons indexed by $i \in \mathbb{N}_l = \{1, 2, ..., n_l\}$.
Each hidden layer computes a linear transformation followed by ReLU to produce the activations $\mathbf{h}^l = [h_1^l, h_2^l, ..., h_{n_l}^l]^T$.
The transformation is defined as $f_l: \mathbb{R}^{n_{l-1}} \rightarrow \mathbb{R}^{n_l}$ parameterized by a weight matrix $\bvec{W}^l \in \mathbb{R}^{n_l \times n_{l-1}}$ and a bias vector $\mathbf{b}^l \in \mathbb{R}^{n_l}$ to give the activations $\mathbf{h}^l = \max{\{0, \bvec{W}^l \mathbf{h}^{l-1} + \mathbf{b}^l\}}$ (where the ReLU activation is applied element-wise).
The network's output layer computes a linear transformation without a ReLU function: $\mathbf{y} = \bvec{W}^{L+1}\mathbf{h}^{L} + \bvec{b}^{L+1}$.

Together these transforms construct a piecewise linear mapping, $F: \mathbb{R}^{n_0} \rightarrow \mathbb{R}^m$, from inputs, $\bvec{x} \in \mathbb{R}^{n_0}$, to outputs, $\bvec{y} \in \mathbb{R}^m$, by partitioning the input space of the network into a set of convex regions where points within a region share a single affine transformation.
Region boundaries result from some neuron(s) in a hidden layer of the network switching from positive to zero activation (or zero to positive activation). 
We refer to these convex regions as activation regions.

\begin{definition}[Activation pattern]\label{def:activation-pattern}
Consider an input point $\bvec{p} \in \mathbb{R}^{n_0}$ for a specific ANN. 
The \textit{activation pattern} of $\bvec{p}$ is a tuple of binary indicator vectors $Z = (\bvec{z}^1, ..., \bvec{z}^L)$ that indicate the internal state configuration of the hidden layer neurons, where:
\[
z^l_i = 
\begin{dcases*}
1 & if  $h^l_i > 0$\, \\[1ex]
0 & if $h^l_i \leq 0$\,
\end{dcases*}
\]
We say the input $\bvec{p}$ corresponds to activation pattern $Z$ if passing $\bvec{p}$ as input into our neural network produces the activations in $Z$.
We consider an \textit{activation pattern prefix} up to layer $k \leq L$, as $Z^{1..k} = (\bvec{z}^1, ..., \bvec{z}^k)$. 
\end{definition}

\begin{definition}[Activation region]
For a given ANN, an \textit{activation region} is the set of points $\bvec{p} \in \mathbb{R}^{n_0}$ with the same activation pattern $Z$.
\end{definition}

\subsection{Hyperplane Arrangements}

Our work takes a geometric perspective to understanding and describing deep neural networks. 
As mentioned in the previous section, a MLP-ANN with ReLU activations partitions the input space into convex polytopes, where each polytope supports an affine transformation.
Collectively, these regions define a piecewise linear function that is continuous at the boundaries of the polytope. 
In this work, we are concerned with the hyperplane arrangements defined by the weights and biases of ANNs, how these arrangements partition a network's input space into convex polytopes, and the function within each polytope.
In this section we provide the necessary geometric background given some ambient space $\mathbb{R}^d$.

\begin{definition}[Hyperplane arrangement]
We define a \textit{hyperplane arrangement}, $\mathcal{A}$, as a finite set of distinct hyperplanes, where a hyperplane is described by the set $\{ \bvec{x} \in \mathbb{R}^d : \bvec{a}^T \bvec{x} = b \}$, where $\bvec{a} \in \mathbb{R}^d \setminus \{0\}$ and $b \in \mathbb{R}$.
A hyperplane arrangement partitions $\mathbb{R}^d$ into convex polytopal regions called faces. 
\end{definition}

\begin{definition}[Cell]
A \textit{cell}, $C$, is a $d$-dimensional face formed by a hyperplane arrangement. 
We notate a set of cells as $\mathcal{C}$.
\end{definition}

\begin{definition}[Sign vector]
Any point $\bvec{p} \in \mathbb{R}^d$ can be described by its relationship to each hyperplane in $\mathcal{A}$. 
Let hyperplane $H_i$ be described by $\{ \bvec{x} \in \mathbb{R}^d : \bvec{a}^T \bvec{x} = b \}$. 
If $\bvec{a}^T\bvec{p} > b$, then $\bvec{p}$ is on the positive side of hyperplane $H_i$, which we denote as $H_i^+$.
Similarly, if $\bvec{a}^T \bvec{p} < b$, then $\bvec{p}$ is on the negative side of the hyperplane, denoted as $H_i^-$.
If $\bvec{a}^T \bvec{p} = b$, the point $\bvec{p}$ falls on the hyperplane $H_i$.
We describe the location of $\bvec{p}$ by a \textit{sign vector}, $\bvec{s}$, consisting of $+$, 0, and $-$ signs as follows:
\[
s_i = 
\begin{dcases*}
$+$ & if  $\bvec{p} \in H_i^+$ , \\
0   & if $\bvec{p}$ is on $H_i$ , \\
$-$ & if $\bvec{p} \in H_i^-$.
\end{dcases*}
\]
Points that have the same sign vector can be grouped into \textit{faces}. 
We use the symbol $\epsilon$ to refer to a sign vector of length 0.
We refer to a \textit{sign vector prefix} as $\textbf{s}[1..k]$, i.e., the sign vector up to $k$th hyperplane. 
\end{definition}

Notice the sign vector of a cell consists solely of $+$'s and $-$'s. 
We say that the cell $C(\bvec{s})$ is described by sign vector $\mathbf{s}$ if all points $\bvec{p} \in C$ have sign vector $\mathbf{s}$. 

\subsection{Cell Partitioning in Shallow Neural Networks}

The non-differentiable boundary of the ReLU activation function forms a hyperplane.
A layer of neurons with ReLU activation functions creates a hyperplane arrangement. 
The arrangement partitions the network's input space into cells, where each cell is a unique activation region.
In this section we describe how a shallow network (i.e. a network with one hidden layer) forms hyperplane arrangements and activation regions.
In the following section, we look at how adding subsequent deep hidden layers further partitions the activation regions formed by the first hidden layer. 

The first hidden layer of a neural network forms a hyperplane arrangement $\mathcal{A}^1$ of $n_1$ hyperplanes, where neuron $i$ creates hyperplane $H^1_i$. 
The hyperplane, $\{ \bvec{x} \in \mathbb{R}^{n_0}: \bvec{w}^1_i \textbf{x} = -b^1_i \}$, is found at the non-differentiable boundary of the ReLU function, i.e. $\bvec{w}^1_i \bvec{x} + b^1_i = 0$.

The position of any point $\bvec{p} \in \mathbb{R}^{n_0}$ can be described by a sign vector $\bvec{s} \in \{+,0,-\}^{n_1}$ of arrangement $\mathcal{A}^1$. 
If the point $\bvec{p}$ is within a cell, then there are no 0s in $\bvec{s}$, and thus there is mapping from $\bvec{s}$ to the activation pattern $Z$ of the network when given $\bvec{p}$ as input:
\[
z^1_i = 
\begin{dcases*}
1 & if  $\bvec{s}_i = +$\, \\
0 & if $\bvec{s}_i = -$\,
\end{dcases*}
\]
So, a shallow network forms a partition of cells where each cell corresponds to a unique activation pattern of the network. 
For example, a sign vector $\bvec{s} = [+, - ,+]$ would correspond to an activation pattern $Z = ([1, 0, 1])$.

\subsection{Cell Partitioning in Deep Neural Networks}\label{sec:cell-formation}

The hyperplane arrangement formed by a deep layer $l$ in a neural network is dependent on the piecewise linear function of the activation regions formed by layers $1$ to $l-1$.
So, one can think of layer $l$ as partitioning the cells created by layer $l-1$.
How layer $l$ partitions the activation regions formed by the proceeding layer is dependent on the linear functions within the regions. 
Thus, each region may be partitioned differently, with the constraint that hyperplane cuts must be continuous across the cell boundaries formed by proceeding layers. 
In this section, we first describe how the second hidden layer of a network forms these conditional hyperplane arrangements and then extend the description to an arbitrarily deep layer.  

We can write the activations of the second hidden layer as:
\begin{align*}
\textbf{h}^2 &= \bvec{W}^2 r(\bvec{W}^1 \textbf{x} + \bvec{b}^1) + \bvec{b}^2 \\
             &= \bvec{W}^2 \bvec{D}^1_\bvec{x}  \bvec{W}^1 \textbf{x} + \bvec{W}^2 \bvec{D}^1_\bvec{x} \bvec{b}^1 + \bvec{b}^2 \\
             &= \bvec{W}^{2*}_\bvec{x} \textbf{x} + \bvec{b}^{2*}_\bvec{x}
\end{align*}
where $\bvec{D}^1_\bvec{x} = \textrm{diag}(\textbf{z}^1_\textbf{x})$ is a diagonal matrix where $\textbf{z}^1_\textbf{x}$ is the first layer indicator vector of the network's activation pattern when given input $\bvec{x}$.
The activations can be described by the effective weights, $\bvec{W}^{2*}_\bvec{x}$, and bias, $\bvec{b}^{2*}_\bvec{x}$, terms. 
Note, these terms are the same across the activation region described by $\textbf{z}^1_\textbf{x}$.
Now, given a cell, $C^1_j$, formed by the first layer of the network, we can form the hyperplane arrangement of the second layer conditioned on the linear functions within the first layer cell, $\mathcal{A}^2 | C^1_j$.
To form $\mathcal{A}^2 | C^1_j$, we compute the effective weights and bias terms for each second layer neuron, $i$, and solve for the corresponding hyperplane at the non-differentiable boundary of the ReLU function: $\bvec{w}^{2*}_i \bvec{x} = -b^{2*}_i$.
Thus, for all input points within the first layer cell, $C^1_j$, we have formed a new hyperplane arrangement, $\mathcal{A}^2|C^1_j$, and a new set of activation regions. 

We can generalize this to any deep layer in the network, $l \ge 2$:
\begin{align*}
\bvec{h}^l &= \bvec{W}^l r(\bvec{W}^{(l-1)*}_\bvec{x}\bvec{x} + \bvec{b}^{(l-1)*}_\bvec{x}) + \bvec{b}^l \\
             &= \bvec{W}^l \bvec{D}^{l-1}_\bvec{x} \bvec{W}^{(l-1)*}_\bvec{x} \bvec{x} + \bvec{W}^l \bvec{D}^{l-1}_\bvec{x} \bvec{b}^{(l-1)*}_\bvec{x} + \bvec{b}^l \\
             &= \bvec{W}^{l*}_\bvec{x} \bvec{x} + \bvec{b}^{l*}_\bvec{x}
\end{align*}
At $l = L$, all the cells created by the final layer are found and thus the activation regions for the entire network.

Note, there are no guarantees that all of the hyperplanes of some arrangement $A^l|C^{l-1}_j$, will intersect with cell $C^{l-1}_j$. 
In fact, it is possible that all the hyperplanes of the arrangement could fall outside of the cell, forming no new partitions. 
However, even if all the hyperplanes fall outside, the hyperplanes still influence the linear function of the activation region depending on their orientations. 
Therefore, although the set of points that define a cell may not change from layer $l-1$ to $l$, the linear function of the activation region may change. 

Further note that any point $\bvec{p} \in \mathbb{R}^{n_0}$ can be described by the sign vectors of the hyperplane arrangements at each layer. 
We call this a \textit{network sign vector}.

\begin{definition}[Network sign vector]
    For a point $\bvec{p} \in \mathbb{R}^{n_0}$ we define a \textit{network sign vector} that describes $\bvec{p}$ as a tuple of sign vectors $V = (\bvec{s}^1, ..., \bvec{s}^L)$, where $\bvec{s}^l$ is the sign vector for $\bvec{p}$ with respect to arrangement $\mathcal{A}^l | C^{l-1}((\bvec{s}^1,...,\bvec{s}^{l-1}))$.
    In other words, $\bvec{s}^l$ is the sign vector of arrangement $\mathcal{A}^l$ conditioned on the arrangement of $\mathcal{A}^{l-1}$ given $\bvec{p}$.
    We say an input $\bvec{p}$ to the network corresponds to a network sign vector $V$ if $\bvec{p}$ is in the face defined by $V$.
    If $\bvec{p}$ falls within a cell, then from Definition \ref{def:activation-pattern}, $\bvec{p}$ can be described as a network sign vector with only `$+$' and `$-$' values.
\end{definition}

We say that the cell $C(V)$ is described by network sign vector $V$ if all points within $C(V)$ correspond to network sign vector $V$. 
Thus, a deep neural network forms a partition of cells where each cell corresponds to a unique activation region of the network, and all activation regions map to a unique network sign vector $V$ and cell $C(V)$ (see Figure \ref{fig:ex} for illustration).

We can define a network sign vector up to a specific layer $k$ and call this a \textit{network sign vector prefix}. 

\begin{definition}[Network sign vector prefix]
    Let $V$ be a network sign vector. 
    The \textit{network sign vector prefix} of $V$ up to $k$, where $k \leq L$, is $V^{1..k} = (\bvec{s}^1, ..., \bvec{s}^{k})$.
\end{definition}

We use $\mathcal{V}$ to refer to a set of network sign vectors. 
We sometimes refer to a set of network sign vector prefixes up to layer $l$ as $\mathcal{V}^l$.

\begin{figure}
    \centering
    \includegraphics[width=.45\textwidth]{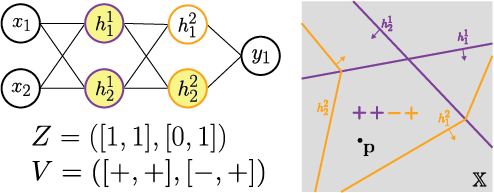}
    \caption{An example neural network (left top), its partitioning of the bounded domain $\mathbb{X}$ into cells (right), and the activation pattern, $Z$, and network sign vector, $V$, corresponding to $\bvec{p} \in \mathbb{X}$ (left bottom).}\label{fig:ex}
\end{figure}

\section{Problem statement}\label{sec:prob-desc}

In this section, we define the problem we consider throughout the rest of this paper -- the problem of cell enumeration in ANNs. 
We are interested in finding all the activation regions (i.e., cells) and corresponding network sign vectors of the piecewise continuous function, $F: \mathbb{R}^{n_0} \rightarrow \mathbb{R}^m$, formed by a neural network.
We call this the \textit{neural network cell enumeration problem} or \textit{NN-CellEnum}. 

The NN-CellEnum problem takes as input a neural network described by its parameters $\theta = \cup_{l=1}^{L+1}\{\bvec{W}^l, \bvec{b}^l\}$ and a set, $X$, of hyperplanes whose closed upper half-spaces intersection is the bounded domain $\mathbb{X}$.
The domain $\mathbb{X}$ bounds the possible input data seen by the neural network. 
For example, if the network's input domain is black and white images of size $28 \times 28$ (such as the EMNIST dataset \cite{cohen2017emnist} in Figure \ref{fig:EMNIST-classes}), then the input vectors of the network are of size $n_0 = 784$ and each entry represents a continuous pixel value from 0 (black) to 1 (white).
Thus, the bounded domain is $\mathbb{X} = [0,1]^{784}$.

The output of our problem is a set, $\mathcal{V}^L$, of network sign vectors such that exactly every activation region formed by the network, that intersects with the bounded domain, has its corresponding network sign vector in the set.
And thus $|\mathcal{V}^L| = |\mathcal{C}^L|$ where $\mathcal{C}^L$ is the set of all cells formed by the neural network.

We make a few assumptions about the problem setup. 
First, we assume that the network is large enough that it would take prohibitively long to find all cells without parallelism, i.e., we consider network sizes beyond toy examples. 
Second, we assume the number of cells formed by the first layer of the network is much larger than the number of processors available, i.e., $|\mathcal{C}^1| \gg P$, where $P$ is the number of processors.
Finally, we assume that the number of first layer cells is close to the theoretical maximum $|\mathcal{C}^1| \approx \sum_{k=0}^{n_0}\binom{n_1}{k}$ \cite{schlafli1901anzeige}.
This final assumption is met when most of the hyperplanes in $\mathcal{A}^1$ intersect within $\mathbb{X}$, which is often the case in our experiments.
Note, however, a poor model initialization or training paradigm could cause many hyperplane intersections of $\mathcal{A}^1$ to occur outside of $\mathbb{X}$. 

Given these assumptions, two problem settings arise: (1) $n_1 \leq n_0$ and thus $|\mathcal{C}^1| \approx 2^{n_1}$; and (2) $n_1 > n_0$ and thus $|\mathcal{C}^1| < 2^{n_1}$.
Which setting we are in changes our algorithmic approach for finding the first layer cells. 
In this paper we will present a general serial algorithmic framework and parallel algorithms for both settings, but our primary focus will be a detailed description and implementation for the first problem setting only.
The algorithms we present are output-polynomial (i.e. the running time is polynomial in the size of the output) since the output size of the problem can vary widely depending on the values of $\theta$.

Note, both these problem settings are common in machine learning.
This paper will focus on the first setting because of the dataset chosen and the model architectures for the algorithm implementation and experiments in Section \ref{sec:results}.

\section{A framework for neural network cell enumeration algorithms}\label{sec:framework}

We are interested in cell enumeration of large neural networks in both problem settings. 
In this section we present a novel serial algorithm framework and the existing algorithms from computational geometry that our framework invokes as subroutines. 
In the next section we apply this framework to develop parallel algorithms for the NN-CellEnum problem.
The algorithm framework can be applied to any input size and MLP architecture with a piecewise linear activation function -- and thus any model that can be transformed into an MLP, such as a convolutional neural network \cite{villani_any_2023}. 
In Section \ref{sec:algo-framework} we introduce our algorithm framework.
In Section \ref{sec:sub-routines} we present the subroutines.

\subsection{An Algorithm Framework for the NN-CellEnum Problem}\label{sec:algo-framework}
 
Finding all of the activation regions in a neural network requires a layer-wise construction.
At a given layer, we find all of the new activation regions formed by the given layer's hyperplane arrangement conditioned on each activation region from the previous layer (Algorithm \ref{alg:DeepCellEnum}).
New activation regions can be found using any computational geometry cell enumeration algorithm with augmentation to include a bounded domain (Algorithm \ref{alg:Bound-CellEnum}).
Below we present our algorithm framework, \textit{LayerWise-NNCE-Framework}. 

\begin{algorithm}
\caption{LayerWise-NNCE-Framework($\theta$, $X$)}\label{alg:DeepCellEnum}
\begin{algorithmic}[1]
\Input
\Desc{$\theta$}{Parameters of a deep neural network.}
\Desc{$X$}{Set of hyperplanes defining the bounded input domain of the network.}
\EndInput
\Output
\Desc{$\mathcal{V}^L$}{A set of network sign vectors.}
\EndOutput
\State $\mathcal{V}^1 \gets \func{Bound-CellEnum}(\mathcal{A}^1, X)$
\For{$l$ in $2..L$}
    \State $\mathcal{V}^l \gets \emptyset$
    \For{$V^{1..l-1}_i$ in $\mathcal{V}^{l-1}$}
        \State $C^{l-1}_i \gets C^{l-1}(V_i^{1..l-1})$
        \State $\mathcal{S}^l_i \gets \func{Bound-CellEnum}$($\mathcal{A}^l|C^{l-1}_i$, $C^{l-1}_i \cup X$)
        \For{$\bvec{s}^l_j$ in $\mathcal{S}^l_i$}
            \State $\mathcal{V}^l \gets \mathcal{V}^l \cup \{V^{1..l-1}_i || \bvec{s}^l_j \}$
        \EndFor
    \EndFor
\EndFor
\State \Return $\mathcal{V}^L$
\end{algorithmic}
\end{algorithm}

The framework takes as inputs the NN-CellEnum problem inputs, $\theta$ and $X$.
The first layer network sign vector prefixes are found by calling a bounded cell enumeration for hyperplane arrangements subroutine (Bound-CellEnum; Algorithm \ref{alg:Bound-CellEnum}) on the first layer hyperplane arrangement, $\mathcal{A}^1$.
A Bound-CellEnum algorithm takes as input a hyperplane arrangement and a bounded region represented as a set of hyperplanes. 
For the first layer, the hyperplane arrangement is $\mathcal{A}^1$ and the bounded region is $X$.
The Bound-CellEnum algorithm returns all of the sign vectors corresponding to the cells created by the arrangement that intersect with the bounded region. 
Now the algorithm framework proceeds to iterate over the rest of the hidden layers, $l = 2..L$, finding new cells (via their sign vector) conditioned on the cells found in the previous layers, $\mathcal{C}^{l-1}$, where these cells are formed as discussed in Section \ref{sec:cell-formation}. 

For each cell, $C^{l-1}_i$, in $\mathcal{C}^{l-1}$, we find the hyperplane arrangement $\mathcal{A}^l | C^{l-1}_i$. 
We then find all the cells and their respective sign vectors of the arrangement that intersect with $C^{l-1}_i$ and the bounded domain by calling a Bound-CellEnum subroutine with inputs $\mathcal{A}^l | C^{l-1}_i$ and $C^{l-1}_i \cup X$ as the arrangement and bounded region arguments, respectively. 
When we have found all new layer $l$ cells, $\mathcal{C}^l$, conditioned on the cells found in layer $l-1$, we move on to finding cells in layer $l+1$. 
When ${l=L}$, we have found all the cells $\mathcal{C}^L$ formed by the neural network and return the set of all the network sign vectors $\mathcal{V}^L$.

\begin{algorithm}
\caption{Bound-CellEnum($\mathcal{A}$, $T$)}\label{alg:Bound-CellEnum}
    \begin{algorithmic}[1]
    \Input
    \Desc{$\mathcal{A}$}{Hyperplane arrangement.}
    \Desc{$T$}{A set of hyperplanes that defines a bounded region.}
    \EndInput
    \Output
    \Desc{$\mathcal{S}$}{The set of sign vectors found for cells of $\mathcal{A}$ that intersect with the bounded region $T$.}
    \EndOutput
    \end{algorithmic}
\end{algorithm}

Note, finding the existence of a cell given a sign vector or a sign vector prefix can be done by finding a witness point within the corresponding cell. 
This problem can be constructed as a linear program and solved in $O(\textrm{lp}(n,d))$ time, where $n$ is the length of the sign vector (or number of hyperplanes in the arrangement) and $d$ is the dimension of the ambient space (see Appendix \ref{apx:witness}). 

\subsection{Existing Serial Algorithms for Cell Enumeration of Hyperplane Arrangements}\label{sec:sub-routines}

There are many possible algorithms to use for the Bound-CellEnum subroutine. 
Here we discuss three popular algorithms: Bounded Exhaustive Enumeration (Bound-ExhEnum), Bounded Reverse Search (Bound-RS) \cite{avis_reverse_1996, sleumer_hyperplane_2000}, and Bounded Incremental Enumeration (Bound-IncEnum) \cite{rada_new_2018}. 
We primarily focus on Bound-ExhEnum and Bound-IncEnum, but briefly mention Bound-RS as as possible candidate.

\textbf{Bound-ExhEnum:} The naive algorithm for bounded cell enumeration is Bound-ExhEnum (Algorithm \ref{alg:Bound-ExhEnum}). 
This algorithm checks all possible $2^n$ sign vectors, where $n$ is the number of hyperplanes in the arrangement. 
The algorithm appears computationally expensive. 
However, if the number of cells formed by the arrangement is close to $2^n$ -- which is common when the number of hyperplanes is less than the dimension of the ambient (input) space, and most hyperplanes intersect within the bounded region -- then this algorithm is output-polynomial with complexity $O(2^n \cdot \textrm{lp}(n,d))$.

\begin{algorithm}
\caption{Bound-ExhEnum($\mathcal{A}$, $T$)}\label{alg:Bound-ExhEnum}
\begin{algorithmic}[1]
\State $\mathcal{S} \gets \emptyset$
\State $n \gets $ number of hyperplanes in $\mathcal{A}$ 
\For{all $\bvec{s}_i$ in $\{+,-\}^n$} 
    \If{there exists witness $w$ of $C(\bvec{s}_i) \cup T$}
        \State $\mathcal{S} \gets \mathcal{S} \cup \{\bvec{s}_i\}$
    \EndIf
\EndFor 
\State \Return $\mathcal{S}$
\end{algorithmic}
\end{algorithm}

\textbf{Bound-RS:} The main idea of the Bound-RS algorithm, developed by Avis and Fukuda \cite{avis_reverse_1996}, is to start with a known cell and then work outwards finding adjacent cells to construct a search tree of child-parent pointers (Algorithm \ref{alg:Bound-RS}).
Any pair of parent-child cells will share one tight hyperplane and every cell will have exactly one parent.
The algorithm is output-polynomial with complexity: $O(n \textrm{lp}(n, d) |\mathcal{S}|)$.
For more details on the algorithm see \citet{sleumer_hyperplane_2000}. 

\begin{algorithm}
\caption{Bound-RS($\mathcal{A}$, $T$)}\label{alg:Bound-RS}
\begin{algorithmic}[1]
\State $\bvec{s}_0 \gets$ a sign vector s.t. $C(\bvec{s}_0)$ is a cell in $\mathcal{A}$ 
\State $\mathcal{S} \gets \emptyset$ 
\State $\func{\_Bound-RS}$($\bvec{s}_0$)
\State \Return $\mathcal{S}$
\Function{\_Bound-RS}{$\bvec{s}_i$}
    \For{$\bvec{s}_j$ \textbf{in} \func{AdjacencyOracle}($\bvec{s}_i$, $T$)} 
        \If{\func{ParentSearch}($\bvec{s}_j$) $= \bvec{s}_i$}
            \State $\mathcal{S} \gets {\bvec{s}_j}$
            \State \func{\_Bound-RS}($\bvec{s}_j$)
        \EndIf
    \EndFor 
\EndFunction
\end{algorithmic}
\end{algorithm}

\textbf{Bound-IncEnum:} The IncEnum algorithm has the same best known order-of-magnitude upper bounds as  RS, but in practice IncEnum is significantly faster for most test cases \cite{rada_new_2018}.
The algorithm performs an incremental construction of the sign vectors for the cells of a given hyperplane arrangement.
At iteration $k+1$, the algorithm checks if the hyperplane $H_{k+1}$ intersects the cells described by the length $k$ sign vector prefixes that exist in the arrangement.
If the $k+1$ hyperplane splits cell $C_i$, then two length $k+1$ prefixes are formed: $\textbf{s}_i[1..k] \concat -$ and $\textbf{s}_i[1..k] \concat +$. 
If the $k+1$ hyperplane falls outside the cell, then the length $k$ sign vector is extended to a length $k+1$ sign vector with the sign dependent on the orientation of $H_{k+1}$ (Algorithm \ref{alg:Bound-IncEnum}).

\begin{algorithm}
\caption{Bound-IncEnum($\mathcal{A}$, $T$)}\label{alg:Bound-IncEnum}
\begin{algorithmic}[1]
\State $w \gets$ witness point of $T$
\State $\mathcal{S} \gets \emptyset$ 
\State $\func{\_Bound-IncEnum}$($\epsilon$, $w$)
\State \Return $\mathcal{S}$
\Function{\_BoundedIncEnum}{$\textbf{s}[1..k]$, $w$}
    \If{$k < n$} 
        \IfElse {$\sigma \gets +$}{$w \in H_{k+1}^+$}{$-$}
        \State \func{\_Bound-IncEnum}($\bvec{s}[1..k] \concat \sigma$, $w$)
        \If{there exists witness $w'$ of $C(\bvec{s}[1..k] \concat -\sigma) \cup T$}
            \State \func{\_Bound-IncEnum}($\bvec{s}[1..k] \concat -\sigma$, $w$)
        \EndIf
    \Else
        \State $\mathcal{S} \gets \mathcal{S} \cup \{\bvec{s}[1..n]\}$
    \EndIf 
\EndFunction
\end{algorithmic}
\end{algorithm}

If one chooses the Bound-CellEnum subroutine of the LayerWise-NNCE-Framework wisely, then the resulting algorithm will itself be output-polynomial since there are $O(|\mathcal{V}^L|)$ number of calls to the Bound-CellEnum subroutines.

\section{Parallel algorithms for the neural network cell enumeration problem}\label{sec:algos}

We present new parallel algorithms for enumerating activation regions in deep (and shallow) neural networks with a focus on the first problem setting, $n_1 \leq n_0$. 
The algorithms based on our framework are output polynomial in that they require polynomial time computation per activation region.
However, as the network grows larger, the number of activation regions become prohibitively large to enumerate efficiently. 
Fortunately, we find that with parallelization, we can perform enumeration on networks that are larger than toy examples, and that parallelizing algorithms constructed using LayerWise-NNCE-Framework is straightforward.

This section is organized as follows. 
In Section \ref{sec:setting1} we present ParLayerWise1-NNCE, an ``embarrassingly parallel'' algorithm for the first problem setting, $n_1 \leq n_0$.
In Section \ref{sec:correctness-complexity} we prove correctness and analyze performance of our algorithm.
Lastly, in Section \ref{sec:setting2} we sketch parallel algorithms for the $n_1 > n_0$ problem setting.
All the algorithms presented are constructed using our algorithm framework.  
The choice of which subroutine algorithm from Section \ref{sec:sub-routines} to use is a design decision primarily dependent on the problem setting, but other factors such as model size, available computational resources, and implementation preferences can impact the subroutine choice.

\subsection{ParLayerWise1-NNCE: An Embarrassingly Parallel Algorithm for \texorpdfstring{$n_1 \leq n_0$}{Problem Setting 1}}\label{sec:setting1}

In this section, we present an embarrassingly parallel algorithm with dynamic load balancing for the first problem setting, $n_1 \leq n_0$, using a centralized work-pool approach (Algorithm \ref{alg:setting1}).
Since we assume that most hyperplane arrangements $\mathcal{A}^1$ intersect within $\mathbb{X}$ and $n_1 \leq n_0$ (see Section \ref{sec:prob-desc}), we expect the number of first layer cells to be close to $2^{n_1}$ \cite{schlafli1901anzeige}.
Therefore, Bound-ExhEnum is an efficient subroutine choice for cell enumeration of the first layer.
We make no assumptions of cell formation in deep layers, and thus we use Bound-IncEnum as the subroutine for all deep layers.

Note, there are no guarantees that the number of cells formed by layer $l$ within a particular previous layer cell, $C^{l-1}_i$, is close to the number of cells formed by layer $l$  within a different previous layer cell, $C^{l-1}_j$. 
In fact, these numbers can be drastically different -- as we will show later in our experiments. 
Consequentially, we found dynamic load balancing of tasks critically important for enumerating cells as quickly as possible. 

Our algorithm works as follows.  
The master process sends each possible $\{+,-\}^{n_1}$ sign vector of $\mathcal{A}^1$ as a task to the work pool.
Let the layer one sign vector for the task $t$ be called $\bvec{s}^1_t$.
Each worker requests an individual task from the pool and finds the set, $\mathcal{V}^L_t$, of all of the network sign vectors with prefix $\bvec{s}^1_t$.
The worker returns $\mathcal{V}^L_t$ to the master process, and when all work is completed, the master process returns all network sign vectors, $\mathcal{V}^L = \cup_t \mathcal{V}^L_t$.

We now describe how a worker computes the set $\mathcal{V}^L_t$ for some task $t$. 
When a worker receives task $t$ from the work pool with layer one sign vector $\bvec{s}^1_t$, the worker first checks if there exists a witness point $w$ of $C^1(\bvec{s}^1_t) \cup X$. 
If the witness point does not exist, then there is no cell for this sign vector that intersects with the bounded input domain; the worker returns the empty set to the master process and requests a new task from the work pool.
If the witness point does exist, then there is a cell $C^1(\bvec{s}^1_t)$ formed by arrangement $\mathcal{A}^1$ that intersects $\mathbb{X}$; the worker then proceeds to find all the cells formed by layers $l \ge 2$ that intersect $C^1(\bvec{s}^1_t) \cup X$ by iteratively applying the Bound-IncEnum subroutine.

To find all cells formed by the second layer, the worker computes the arrangement $\mathcal{A}^2 | C^1(\bvec{s}^1_t)$ and the bounded region $C^1(\bvec{s}^1_t) \cup X$, and calls Bound-IncEnum with the arrangement and bounded region as arguments.
The Bound-IncEnum subroutine returns the set of layer two sign vectors, $\mathcal{S}^2_{\bvec{s}^1_t}$, that correspond to the cells, $\mathcal{C}^2 | C^1(\bvec{s}^1_t)$, formed by the arrangement within the bounded region.
The worker then constructs a set of network sign vector prefixes of all found cells, $\mathcal{V}^2_t = \{ \bvec{s}^1_t| \| \bvec{s}^2_j : \forall \bvec{s}^2_j \in \mathcal{S}^2_{\bvec{s}^1_t} \}$.
This process continues for all subsequent layers. 
Similarly, at layer $l$, for each cell $C^{l-1}_i$ found in the previous layer, the worker computes the arrangement $\mathcal{A}^l | C^{l-1}_i$ and the bounded region $C^{l-1}_i \cup X$, and calls Bound-IncEnum with the arrangement and bounded region as arguments.
The Bound-IncEnum subroutine returns the set of layer $l$ sign vectors, $\mathcal{S}^l_i$, that correspond to the cells, $\mathcal{C}^l | C^{l-1}_i$, formed by the arrangement within the bounded region.
The worker constructs a set of network sign vector prefixes of all found layer $l$ sign vectors, $\mathcal{V}^l_t = \cup_i \{ V^{1..l-1}_i \| \bvec{s}^l_j : \forall \bvec{s}^l_j \in \mathcal{S}^l_i \}$, where $V^{1..l-1}_i$ is the corresponding network sign vector prefix of $C^{l-1}_i$.

When the worker has found $\mathcal{V}^L_t$, the worker has found the cells $C^L|C^{L-1}_i \forall i$, and thus the worker has found all cells and corresponding network sign vectors for the task.
The worker returns $\mathcal{V}^L_t$ to the master process and requests a new task from the work pool.
When all tasks are completed, the master process unions all the sets received from the workers and returns the set of all network sign vectors, $\mathcal{V}^L = \cup_t \mathcal{V}^L_t$.

We represent the work pool as a shared queue data structure.
The master process enqueues tasks onto the queue and the workers dequeue tasks.
When a worker finishes a task it writes the output for that task to a file and sends an acknowledgement to the master process on a callback queue. 
When the master process has received acknowledgements for all tasks, it combines all worker output to create the set, $\mathcal{V}^L$, of all network sign vectors. 

\begin{algorithm*}
\caption{ParLayerWise1-NNCE}\label{alg:setting1}
\begin{multicols}{2}
\begin{algorithmic}[1]
    \Function{Master}{$\theta$}
    \Input
    \Desc{$\theta$}{Parameters of a neural network.}
    \EndInput
    \Output
    \Desc{$\mathcal{V}^L$}{The set of network sign vectors formed by $\theta$.}
    \EndOutput
    
    \State $\mathcal{V}^L \gets \varnothing$
    
    \State \commentline{Enqueue all tasks}
    \State $\mathcal{S}^1 \gets \{+,-\}^{n_1}$
    \For{$t$ in $1...|\mathcal{S}^1|$}
        \State \func{enqueue}($t$, $\bvec{s}^1_{t}$)
    \EndFor

    \State \commentline{Collect worker responses}
    \State $\func{num\_acked} \gets 0$
    \While{$\texttt{num\_acked} < |\mathcal{S}^1|$}
        \State \texttt{recv}($t$, $\mathcal{V}^L_t$)
        \State $\mathcal{V}^L \gets \mathcal{V}^L \cup \mathcal{V}^L_t$
        \State $\func{num\_acked} \gets \func{num\_acked} + 1$
    \EndWhile
    \State \commentline{Terminate and return}
    \For{$\_$ in $1...P$}
        \State \func{enqueue}($-1$, \textit{`terminate'})
    \EndFor
    \State \Return $\mathcal{V}^L$
    \EndFunction
\end{algorithmic}
\columnbreak
\begin{algorithmic}[1]
    \Function{Worker}{$\theta$, $X$}
    \Input
    \Desc{$\theta$}{Parameters of a neural network.}
    \Desc{$X$}{Set of hyperplanes bounding network input.}
    \EndInput    
    \State \texttt{dequeue}($t$, $\bvec{s}^1_t$)
    \While{$t$ is not -1}
    \State \commentline{Check if layer 1 sign vector exists}
    \If{there exists witness $w$ of $C^1(\bvec{s}^1_t) \cup X$}
        \State $\mathcal{V}^1_t \gets \{ \bvec{s}^1_t \}$
    \Else
        \State \texttt{ack}($t$, $\emptyset$)
    \EndIf
    \State \commentline{Find network sign vectors}
    \For{$l$ in $2 ... L$}
        \State $\mathcal{V}^l_l \gets \emptyset$
        \For{$V_i^{1..l-1}$ in $\mathcal{V}^{l-1}_t$}
            \State $C^{l-1}_i \gets C^{l-1}(V_i^{1..l-1})$
            \State $\mathcal{S}^l_i \gets \func{Bound-IncEnum}$($\mathcal{A}^l|C^{l-1}_i$, $C^{l-1}_i \cup X$)
            \For{$\bvec{s}^l_j$ in $\mathcal{S}^l_i$}
                \State $\mathcal{V}^l_t \gets \mathcal{V}^l_t \cup \{V^{1..l-1}_i || \bvec{s}^l_j \}$
            \EndFor
        \EndFor
    \EndFor
    \State \texttt{ack}($t$, $\mathcal{V}^L_t$)
    \State \texttt{dequeue}($t$, $\bvec{s}^1_t$)
    \EndWhile
    \EndFunction
    \end{algorithmic}
\end{multicols}
\end{algorithm*}

\subsection{ParLayerWise1-NNCE Correctness Proof and Performance Analysis}\label{sec:correctness-complexity}

Algorithm \ref{alg:setting1} is correct if the returned set, $\mathcal{V}^L$, from the master process contains exactly the network sign vectors corresponding to all the cells formed by the neural network that intersect with the bounded input domain of the network.
Below, we present the correctness proof by first arguing that the master process is correct if it receives correct responses from the workers; then we argue that the workers produce correct responses. 

Assume that any worker process returns exactly all network sign vectors for the task $t$; we argue that the master process returns exactly all the network sign vectors.
The master process enqueues one task for each possible layer one sign vector, $\{+,-\}^{n_1}$.
The master process tracks the number of outstanding tasks. 
For each task $t$, it receives a response from a worker containing exactly all the network sign vectors found to intersect with the cell corresponding to the task's layer one sign vector and the bounded domain. 
Let this set be $\mathcal{V}_t^L$.
If the layer one sign vector does not exist, then $\mathcal{V}_t^L$ is the empty set. 
The master process unions all responses from the workers into a set of exactly all network sign vectors, $\mathcal{V}^L = \cup_t \mathcal{V}_t^L$, and returns this set when all outstanding work is completed.

We now argue that a worker finds and returns to the master process exactly all the network sign vectors for a given task $t$.
We provide an inductive proof for the following invariant property.

\begin{property}\label{prop:layer}
Assume a worker process is given a task $t$ with the layer one sign vector $\bvec{s}^1_t$.
For a layer $l$ of the network, the worker process finds exactly all the network sign vector prefixes, $\mathcal{V}^l_t$, that correspond to cells $\cup_i \mathcal{C}^l|C^{l-1}_i$ for all cells $C^{l-1}_i$, found in the previous layer -- where $\mathcal{C}^l|C^{l-1}_i$ are the cells formed by the hyperplane arrangement $\mathcal{A}^l|C^{l-1}_i$ that intersect the bounded region $C^{l-1}_i \cup X$.
\end{property}

\begin{proof}

Base case 1 ($l=1$): If $C(\bvec{s}^1_t)$ exists and intersects with $X$, the worker finds a witness point $w$ and creates the set $\mathcal{V}^1_t = \{ \bvec{s}^1_t \}$. 
Otherwise, no witness point is found and the empty set is returned to the master process.

Base case 2 ($l=2$): There is only one network sign vector prefix $\bvec{s}^1_t$ in the set $\mathcal{V}^1_t$. 
The worker calls \func{Bound-IncEnum}($\mathcal{A}^2 | C^1(\bvec{s}^1_t)$, $C^1(\bvec{s}^1_t) \cup X$) to find all layer two sign vectors $\mathcal{S}^2_{\bvec{s}^1_t}$ and creates a set of all network sign vector prefixes up to layer 2, $\mathcal{V}^2_t = \{ \bvec{s}^1_t \| \bvec{s}^2_j : \forall \bvec{s}^2_j \in \mathcal{S}^2_{\bvec{s}^1_t} \}$.

Inductive step: Assume Property \ref{prop:layer} is true when $l = k$, we show the property holds for $l = k + 1$. 
The worker first creates an empty set of network sign vector prefixes up to layer $k+1$ for task $t$, $\mathcal{V}^{k + 1}_t = \emptyset$.
For each network sign vector prefix $V_i^{1..k}$ in $\mathcal{V}^k_t$, the worker calls \func{Bound-IncEnum}($\mathcal{A}^{k+1} | C^{k}(V_i^{1..k})$, $C^{k}(V_i^{1..k}) \cup X$) which returns a set of layer $k+1$ sign vectors, $\mathcal{S}^{k+1}_i$, corresponding to the cells  $\mathcal{C}^{k+1} | C^{k}(V_i^{1..k})$ that intersect with $C^{k}(V_i^{1..k}) \cup X$.
The worker then updates the set $\mathcal{V}^{k+1}_t = \mathcal{V}^{k+1}_t \cup \{ V_i^{1..k} \| \bvec{s}^{k+1}_j : \forall \bvec{s}^{k+1}_j \in \mathcal{S}^{k+1}_i \}$.
Therefore, for $l = k + 1$, we have found the set of exactly all network sign vector prefixes, $\mathcal{V}^{k+1}_t$, for task $t$ that correspond to the cells $\cup_i \mathcal{C}^{k+1} | C^{k}(V^{1..k}_i)$ for all cells, $C^{k}(V^{1..k}_i)$, found in the previous layer.
And thus, we prove that Property \ref{prop:layer} holds for layers $l \ge 1$.
\end{proof}

At the final layer $(l=L)$, the worker finds $\mathcal{V}^L_t$ and returns this to the master process as the network sign vectors for task $t$.

The following theorem gives the work and span bounds of the ParLayerWise1-NNCE algorithm (Algorithm \ref{alg:setting1}).
The theorem, in referring to a particular ANN, uses the following parameters:
\begin{itemize}
    \item $n_{max} = \max_{l > 0} n_l$, a positive integer, representing the size of the widest hidden layer.
    \item $\textrm{lp}(*) = \textrm{lp}(|X| + \sum_{l > 0} n_l, n_0)$, a positive real number, representing the largest possible linear program running time where $|X| + \sum_{l > 0} n_l$ is the number of constraints and $n_0$ is input dimension (see Appendix \ref{apx:witness}).
    \item $\zeta = \exp(-0.64n_{max})$, a positive real number, representing the ratio of the maximum number of cells formed by layer $l$ within a particular layer $l-1$ cell over the total number of cells formed by layer $l$ (i.e., $\zeta = \max_i \| \mathcal{C}^l | C^{l-1}_i \| / |\mathcal{C}^l|$).
    The formula for this relationship was derived from our experimental results (see Appendix \ref{apx:complexity-proof}).
\end{itemize}

\begin{theorem}\label{thm:work-span}
    Assume we are in problem setting 1: $n_1 \leq n_0$ and thus $|\mathcal{C}^1| \approx 2^{n_1}$.
    The total expected work and span of Algorithm \ref{alg:setting1} are $O(L n_{max} \textrm{lp}(*) |\mathcal{C}^L|)$ and $O(\zeta L n_{max} \textrm{lp}(*) |\mathcal{C}^L|)$, respectively -- where $|\mathcal{C}^L|$ is the total number of cells formed by the network within the bounded domain.
\end{theorem}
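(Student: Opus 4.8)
The plan is to bound the work and span separately by accounting for the cost of the Bound-CellEnum subroutine calls across all layers and all tasks. The key observation is that the algorithm is a sequence of calls to Bound-IncEnum (for deep layers) and a witness-point check (for the first layer), and that the total number of such calls is controlled by the number of cells. First I would establish the cost of a single worker's computation for one task $t$, then aggregate across all tasks to obtain total work, and finally reason about load balancing to obtain the span.

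\textbf{Bounding the work.} For a single deep-layer call at layer $l$ on a previous-layer cell $C^{l-1}_i$, the worker invokes \func{Bound-IncEnum}($\mathcal{A}^l|C^{l-1}_i$, $C^{l-1}_i \cup X$). By the IncEnum complexity stated in Section~\ref{sec:sub-routines}, this costs $O(n\,\textrm{lp}(n,d)\,|\mathcal{S}^l_i|)$, where the arrangement has at most $n_l \le n_{max}$ hyperplanes, the bounded region uses $|X| + \sum_{l>0} n_l$ constraints (giving the uniform bound $\textrm{lp}(*)$), and $|\mathcal{S}^l_i| = |\mathcal{C}^l|C^{l-1}_i|$ is the number of output cells. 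Summing over all previous-layer cells $C^{l-1}_i$ at a fixed layer, $\sum_i |\mathcal{C}^l|C^{l-1}_i| = |\mathcal{C}^l|$, so the work at layer $l$ is $O(n_{max}\,\textrm{lp}(*)\,|\mathcal{C}^l|)$. Since cells only subdivide as depth increases, $|\mathcal{C}^l| \le |\mathcal{C}^L|$ for every $l$, and summing over the $L$ layers yields total work $O(L\,n_{max}\,\textrm{lp}(*)\,|\mathcal{C}^L|)$. I would note the first-layer witness check and task overhead are dominated by this bound.

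\textbf{Bounding the span.} The span is the longest single chain of dependent computation, which here is the work done by the worker handling the most expensive task. By the embarrassingly parallel structure, tasks are independent, so the span is determined by $\max_t (\text{work for task } t)$. A task is seeded by a single first-layer sign vector, and its work follows the same layer-by-layer accounting, but now the layer-$l$ cell count is restricted to those descending from the single first-layer cell of that task. This is where the parameter $\zeta$ enters: $\zeta = \max_i \|\mathcal{C}^l|C^{l-1}_i\| / |\mathcal{C}^l|$ bounds the fraction of layer-$l$ cells attributable to the heaviest single predecessor cell, and by the experimentally derived relationship (Appendix~\ref{apx:complexity-proof}) this propagates so that the heaviest task processes at most a $\zeta$ fraction of the total cells $|\mathcal{C}^L|$. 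Substituting $\zeta|\mathcal{C}^L|$ for $|\mathcal{C}^L|$ in the work bound gives span $O(\zeta\,L\,n_{max}\,\textrm{lp}(*)\,|\mathcal{C}^L|)$.

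\textbf{The main obstacle} will be making the $\zeta$-based span argument rigorous rather than heuristic. The work bound is essentially a clean aggregation of subroutine costs, but the span requires controlling the \emph{maximum} task load, and the definition of $\zeta$ only directly bounds the per-layer branching ratio, not the cumulative cell count of an entire task's subtree. I would need to argue carefully — likely relying on the empirically derived formula $\zeta = \exp(-0.64 n_{max})$ and the assumption that branching ratios compose multiplicatively down the layers — that the heaviest root-to-leaves task accounts for only a $\zeta$ fraction of all cells. This is the step where the ``expected'' qualifier in the theorem matters: the bound holds in expectation over the cell-distribution model fit to the experiments, so I would be explicit that the span claim is probabilistic and inherits its validity from the model in Appendix~\ref{apx:complexity-proof}, whereas the work bound is a deterministic worst-case statement.
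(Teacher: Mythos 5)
Your proposal matches the paper's own proof in both structure and substance: the same layer-by-layer aggregation of Bound-IncEnum costs (summing $\|\mathcal{C}^l|C^{l-1}_i\|$ over predecessor cells to get $|\mathcal{C}^l|$, then over layers to get $O(L\,n_{max}\,\textrm{lp}(*)\,|\mathcal{C}^L|)$ work), and the same use of $\zeta$ to discount the work down to the heaviest task for the span. The ``obstacle'' you flag --- that $\zeta$ only directly bounds the per-layer branching ratio and its propagation to a whole task's subtree rests on the empirical model of Appendix~\ref{apx:complexity-proof} --- is present in the paper's proof as well, which simply writes the layer-$l$ span as $\zeta$ times the layer-$l$ work without further justification.
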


\begin{proof} 
Finding the layer one cells, $|\mathcal{C}^1|$, takes $O(2^{n_1} \textrm{lp}(*))$ expected work and $O(\textrm{lp}(*))$ expected span using Bound-ExhEnum.
\sloppy Finding the layer two cells, $|\mathcal{C}^2|$, takes expected work $O(n_2 \textrm{lp}(*)  \sum_{i=1}^{|\mathcal{C}^1|} \|\mathcal{C}^2|C_i^1\|) = O(n_2 \textrm{lp}(*) |\mathcal{C}^2|)$ and expected span  $O(n_2 \textrm{lp}(*) \|\mathcal{C}^2|C_i^1\|) = O(\zeta n_2 \textrm{lp}(*) |\mathcal{C}^2|)$.
\sloppy For any layer $l > 2$, finding cells $|\mathcal{C}^l|$ takes expected work $O(n_l \textrm{lp}(*) |\mathcal{C}^l|)$ and expected span $O(\zeta n_l \textrm{lp}(*) \sum_{i=1}^{|\mathcal{C}^{l-1}|} \|\mathcal{C}^l|C^{l-1}_i\|) = O(\zeta n_l \textrm{lp}(*) |\mathcal{C}^l|)$.
Thus, the total work and span of Algorithm \ref{alg:setting1} are
$O(L n_{max} \textrm{lp}(*) |\mathcal{C}^L|)$ and 
$O(\zeta L n_{max} \textrm{lp}(*) |\mathcal{C}^L|)$, respectively.
\end{proof}

\subsection{Parallel Algorithms for \texorpdfstring{$n_1 > n_0$}{Problem Setting 2}}\label{sec:setting2}

In the second problem setting, $n_1 > n_0$, the number of cells formed by the first layer of the network is strictly less than $2^{n_1}$. 
Therefore, Bound-ExhEnum is no longer an efficient algorithm choice for first layer cell enumeration. 
In this section we present sketches of two alternative algorithm for solving the NN-CellEnum problem when $n_1 > n_0$.
These algorithms run a different Bound-CellEnum subroutine for the first layer of the network than Algorithm \ref{alg:setting1}, but the computation for all deep layers is the same.

Our first algorithm uses parallel implementation of Bound-RS with a master-worker paradigm  \cite{avis2016parallel, avis2018mplrs, avis2021mts} to enumerate all the cells of the first layer of the network. 
This algorithm follows the same input-output description as Bound-CellEnum and returns the set of sign vectors for $\mathcal{A}^1$ within the bounded domain $\mathbb{X}$.
From here, the layer one sign vectors of cells found in $\mathcal{A}^1$ can be added to the work pool. 
A worker requests a task from the work pool and runs the Bound-IncEnum subroutine for all subsequent layers as done in lines 14 to 23 of Algorithm \ref{alg:setting1}.
The worker returns the found network sign vectors for the task and requests a new task from the work pool. 
When there is no more work to be done, the master combines all the worker outputs to create the set, $\mathcal{V}^L$, of all the network sign vectors.

An alternative algorithm is to run Bound-IncEnum sequentially on the master process until $P$ sign vector prefixes for $\mathcal{A}^1$ are found. 
Each prefix is then sent to one of the $P$ worker processes to finish the Bound-IncEnum computation for that sign vector prefix. 
When the workers are finished they return the sign vectors for the layer one cells that exist. 
From here, the computation proceeds just as described in our previous algorithm sketch: found layer one sign vectors are added as tasks to the work pool, workers request tasks from the work pool and find all the network sign vectors for the task, and the master process combines all the worker outputs to find the set of all the network sign vectors.

\section{Implementation, Experiments, and Results}\label{sec:results}

We implemented Algorithm \ref{alg:setting1} for the first problem setting, $n_1 \leq n_0$, on neural networks with different depths, widths, and input dimensions.
All networks were trained to solve a 15-class subset of the Extended MNIST (EMNIST) image classification problem (Figure \ref{fig:EMNIST-classes}) \cite{cohen2017emnist} with 2,400 training and 400 testing samples per class. 
The networks were trained using an Adam optimizer \cite{kingma2014adam} with learning rate $10^{-3}$ and batch size 128. 
Networks were initialized with i.i.d. normal weights and biases with variance 2/fan-in and $10^{-6}$, respectively \cite{hanin_deep_2019}.

\begin{figure}
    \centering
    \includegraphics[width=.8\columnwidth]{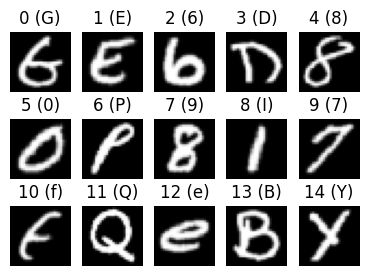}
    \caption{Sample images from our 15 class subset of the EMNIST dataset.}
    \label{fig:EMNIST-classes}
\end{figure}

All code is written in Python 3.10, and witness points are found using the \texttt{Gurobi} \cite{optimization2014inc} linear program solver.
All experiments use 99 workers with clock speeds varying from 2.00 GHz to 2.60 GHz.
Communication between the master and worker processes is done over queues using RabbitMQ as the message broker.

\subsection{Performance Results}

To test algorithm performance, we ran our algorithm on a collection of networks with fixed a depth of two hidden layers, $L=2$.
Both the hidden layers were the same width, i.e. $n_1 = n_2$, with values chosen from $\mathcal{N}_{1,2} = \{ 11, 13, 15 \}$, and the input dimension (i.e., image size) was chosen from the set $\mathcal{N}_0 = \{14 \cdot 14, 21 \cdot 21, 28 \cdot 28 \}$. 
We ran our algorithm on three trained networks for each architecture in $N_0 \times N_{1,2}$.

For each network, we counted the number of sign vectors, $|\mathcal{V}^L|$ -- or equivalently the number of cells, $|\mathcal{C}^L|$ -- returned by the master process and tracked the algorithm runtime. 
As expected from Theorem \ref{thm:work-span}, we found a linear relationship between output size and algorithm runtime (Figure \ref{fig:alg-performance} top).
Increasing the network layer width by only two neurons resulted in an order of magnitude increase in the number of cells formed by the model, and thus an order of magnitude increase in runtime (in hours).
Scaling the input images, and thus the input dimension of the networks had a smaller effect on both the number of cells and runtime. 

\begin{figure}
    \centering
    \includegraphics{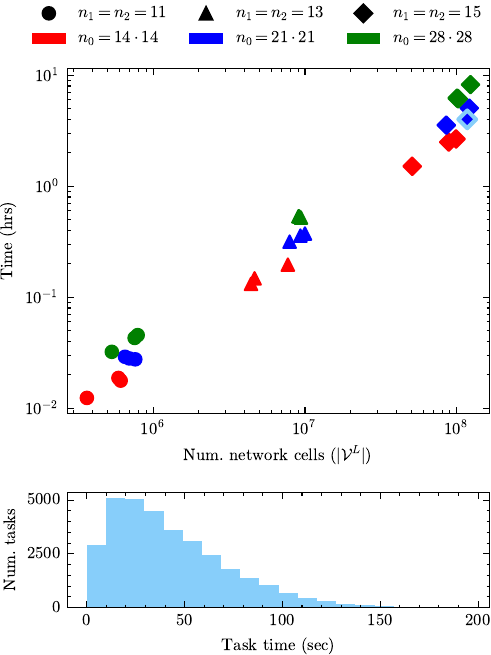}
    \caption{ParLayerWise1-NNCE (Algorithm \ref{alg:setting1}) performance (top) and individual task time for the example network outlined in light blue (bottom).}
    \label{fig:alg-performance}
\end{figure}

Next, we looked at the runtime of each task for an example network (light blue outline in Figure \ref{fig:alg-performance} and bottom histogram in Figure \ref{fig:alg-performance}). 
We found a large variance in the runtime per task (mean 43.11 seconds, std 29.52 seconds, and skew of 0.98 by Fisher-Pearson coefficient of skewness).
At first we found this surprising; we initially thought that the tasks would take more-or-less the same amount of time and that we could statically load balance the work. 
After observing the variance in the task time, we hypothesized that the layer one activation regions with higher dimensional linear functions -- i.e., the number of `$+$' signs in the cell's sign vector -- required more time to run because more layer two regions would be formed from them. 
We present the analysis of this hypothesis in Section \ref{sec:applications-insights}.

\begin{figure}
    \centering
    \includegraphics{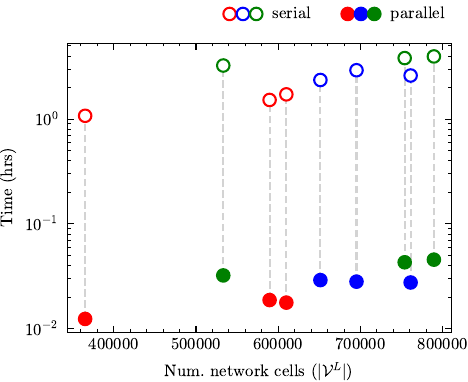}
    \caption{Serial versus parallel runtime for all two-layer networks with architecture $n_1 = n_2 = 11$.} \label{fig:parallel-serial}
\end{figure}

Lastly, we compared serial to parallel runtime for our smallest network size $n_1 = n_2 = 11$.
We found a speedup of two orders of magnitude in the parallel algorithm with 99 workers (Figure \ref{fig:parallel-serial}). 
This demonstrates that parallel algorithms are extremely useful for solving the neural network cell enumeration problem beyond toy examples. 

\subsection{Algorithm Application for Network Insights}\label{sec:applications-insights}

A major purpose of designing and implementing cell enumeration algorithms is to gain insights into how ANNs function through understanding the organization and formation of their activation regions. 
In this section, we give two examples of how our algorithm's output provides useful insights. 
First, we look at how the number of formed regions relates to network performance. 
Second, we investigate our hypothesis from the previous section on why task times have large variance. 

\begin{figure}
    \centering
    \includegraphics{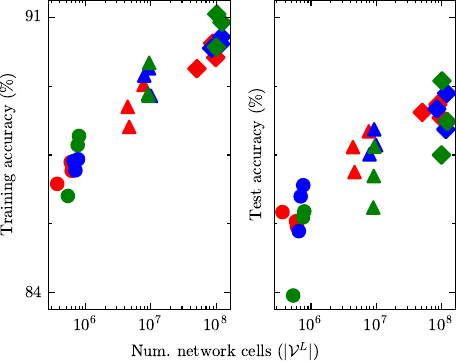}
    \caption{Number of activation regions versus classification accuracy for all networks from Figure \ref{fig:alg-performance}.} \label{fig:cell-expressivity}
\end{figure}

The number of piecewise linear activation regions formed by a neural network is thought to be a defining characteristic of model complexity \cite{bengio_learning_2009, huchette_when_2023}.
We reproduced a common analysis in the literature by looking at how the number of cells correlates with model performance on both the training and testing data (Figure \ref{fig:cell-expressivity}) \cite{serra_bounding_2018}.
We found that larger networks with more cells had both higher training and testing accuracy, suggesting that networks with more activation regions better fit the function to the dataset. 
\begin{figure}
    \centering
    \includegraphics{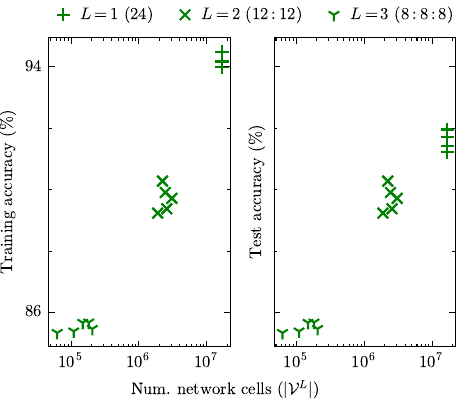}
    \caption{Number of activation regions versus classification accuracy for three networks of differing depth and fixed neuron budget \textnormal{($L=1$ and $n_1=24$; $L=2$ and $n_1=n_2=12$; and $L=3$ and $n_1=n_2=n_3=8$)}.} \label{fig:neuron-budget}
\end{figure}

We further wondered how model depth would affect the number of activation regions, and model performance for our problem. 
To tackle this question, we trained a new collection of networks with a fixed neuron budget of $24$ neurons and varied network depth from 1 to 3 hidden layers.
All networks were trained on the $n_0 = 28 \cdot 28$ input dimension 15-class EMNIST image classification problem.
Network architectures were: $L=1$ with $n_1 = 24$, $L = 2$ with $n_1 = n_2 = 12$, and $L = 3$ with $n_1 = n_2 = n_3 = 8$.
Again, we observed that the number of cells was predictive of model performance on both the training and test data. 
However, we found the model depth to be inversely correlated with the model performance for our problem.
This is in sharp contrast to the popular line of thinking that deep models are more ``expressive'' than shallow models \cite{bianchini2014complexity} -- where expressivity is defined as the model's ability to approximate the function from which the data is sampled.
This empirical observation further emphasizes the importance of studying a model's activation region partitions in order to understand the functions the network represents.

\begin{figure}
    \centering
    \includegraphics{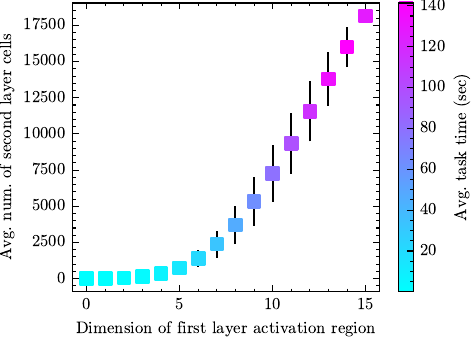}
    \caption{
    Task time analysis of the example model from Figure \ref{fig:alg-performance}.
    The dimension of the linear function of a first layer activation region impacts the number of second layer regions formed within and thus the total task time. }\label{fig:region-dim}
\end{figure}

In a second line of analysis, we investigated why some tasks in our algorithm take longer. 
We performed this analysis on the example network from Figure \ref{fig:alg-performance} (light blue outline); this network has two hidden layers with width $n_1=n_2=15$ and input dimension $n_0=21 \cdot 21$.
As stated in the previous section, we hypothesized that a task with a small runtime would have layer one activation regions with linear functions that are low dimensional, whereas, a task with a large runtime would have layer one activation regions that have high dimensional linear function.
In other words, we expected tasks with shorter runtimes to have few `$+$' signs in their layer one cell sign vector, and we expected tasks with longer runtimes to have many `$+$' signs in their layer one cell sign vector.
The intuition behind our hypothesis was that, the higher dimensional the function of a region, the more hyperplanes would be formed by deeper layers cuts through the region, and thus the more cells would be created within the region.

For each task in our example network, we calculated the dimension of each cell formed by the first layer hyperplane arrangement, $\mathcal{A}^1$, by counting the number of `$+$' signs in each cell's sign vector.
We then counted the number of second layer cells formed within each first layer cell and averaged the results by dimension (Figure \ref{fig:region-dim}).
We found that both the number of regions formed by the second layer and the task runtime increased as the dimension of the linear function of a region from $\mathcal{A}^1$ increased -- supporting our hypothesis.
To our knowledge, this is a novel experimental contribution in understanding the workings of ANNs.

\section{Discussion and future work}

It has been long known that ANNs with ReLU activation functions result in a global function that is piecewise linear \cite{montufar2014number, pascanu2013number}.  
However, for non-trivial problems, the difficulty has been to enumerate all the different activation regions.  
Enumerating these regions is the starting point for analyzing the computational properties of these networks and more intentionally design them.  

In this work, we developed parallel algorithms for neural network cell enumeration and applied one of our algorithms to ANNs trained on a subset of the EMNIST data set, yielding a detailed description of the global piecewise linear function.
To our knowledge, this is the first time that cell enumeration has been accomplished for networks of this size (i.e., over a hundred million cells).
Further, our experiments show how the dimension of the linear function of the activation region formed by an earlier layer affects the further partitioning of the region by deeper layers.
We note here that we have a detailed description and implementation of an algorithm for only the $n_1 \leq n_0$ problem setting, and thus there is more work to be done in analyzing and implementing our algorithm sketches in Section \ref{sec:setting2} for the $n_1 > n_0$ case.

\textbf{Future work:} We would like to extend our algorithms to larger networks and non-multilayer perceptron architectures such as CNNs \cite{villani_any_2023, trimmel_tropex_2021} like AlexNet that have thousands of neurons in the hidden layers.
Such an extension would likely require new approximate counting methods \cite{hanin_deep_2019, balestriero_fast_2023, novak_sensitivity_2018} as well as high levels of parallelization.
Our work currently establishes enumerations of cells in a network with ReLU activation functions. Another direction would be to  consider what modifications are necessary in the case of activation functions with more than two linear regions (including approximating continuous activation functions like sigmoid or tanh), and analyze how such activation functions would change the partitioning or the nature of the activation regions formed by the network.

\bibliographystyle{ACM-Reference-Format}
\bibliography{references, references2}

\appendix

\section{Finding a witness point with linear programming}\label{apx:witness}
We follow the general linear programming procedure described in  \cite{sleumer_hyperplane_2000} to determine the existence of a cell intersecting the bounded input domain.
We construct the linear program (LP) as follows:
First, recall that $X$ is a set of hyperplanes whose intersection of closed upper half-spaces is the bounded domain, $\mathbb{X}$. 
Each of these hyperplanes is added as constraint to the LP. 
Next we add constraints for all the neurons in consideration. 
For example, if we consider a cell formed by some network sign vector prefix, $V^{1..k}$, then we would add the constraints for all the neurons from layers 1 through $k$.
Neuron $i$ of layer $l$ adds the following constraint to the LP formulation: $V^l_i (\bvec{w}^{l*}_i \bvec{x} + b^{l*}_i) \ge 0$, where $\bvec{w}^{l*}_i$ and $b^{l*}_i$ are effective weights and biases (see Section \ref{sec:cell-formation}).

Note that in this formulation, both the bounded input domain and the hyperplanes formed by the neurons are constraints in our LP optimization problem.
The objective of our optimization problem is not to search for an optimal solution but rather for a point of satisfiability within the prescribed constraints -- which we refer to as a \textit{witness point}.
To test for satisfiability of our LP, we run the phase 1 simplex algorithm using the Gurobi LP solver \cite{optimization2014inc}. 
The successful identification of a witness point signifies the existence of a region in the input space where points within this region when passed as input to the neural network will give an activation pattern corresponding to the sign of the neuron's constraints in the LP.

\section{Cell factor approximation for Algorithm \ref{alg:setting1}}\label{apx:complexity-proof}

To find the span bound of Algorithm \ref{alg:setting1} in Section \ref{sec:correctness-complexity}, we used an approximation for the number of cells formed by each layer along the longest running task. 
We used our experimental results from Figure \ref{fig:alg-performance} to find an approximate factor for the most layer $l$ cells formed within a layer $l-1$ cell.

The maximum number of layer two cells formed within a layer one cell, $\max_i\|\mathcal{C}^2|C^1_i\|$, was dependent on the number of neurons in the second layer (Figure \ref{fig:complexity-approx}, left).
The ratio of this maximum over the total number of layer two cells had an exponentially decaying relationship (Figure \ref{fig:complexity-approx}, right).
We fit an exponential decay function to the data and found $\max_i\|\mathcal{C}^2|C^1_i\| = 2.234 \exp(-0.6445 n_2) |\mathcal{C}^2|$.
\sloppy Throughout our work-span analysis we assumed this relationship for all layers, i.e., $\max_i\|\mathcal{C}^l|C^{l-1}_i\| = 2.234 \exp(-0.6445 n_l) |\mathcal{C}^l|$, and let parameter $\zeta$ represent the exponential factor, $\zeta = \exp(-0.6445 n_{max})$.

\begin{figure}
    \centering
    \includegraphics{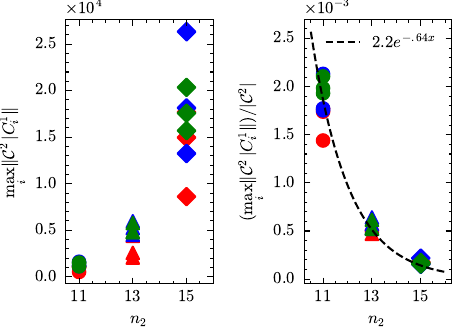}
    \caption{Comparison of maximum layer two cells formed within a layer one cell, $\max_i\|\mathcal{C}^2|C^1_i\|$, to all cells formed by the second layer of the network, $|\mathcal{C}^2|$.}
    \label{fig:complexity-approx}
\end{figure}

\end{document}